\newcommand{\I}{^{\textnormal{\tiny I}}}
\newcommand{\II}{^{\textnormal{\tiny II}}}
\newcommand{\dd}{\mathrm{d}}
\newcommand{\R}{\mathbb{R}}
\newcommand{\UP}{\overline{\mathbb{P}}}
\newcommand{\K}{\mathcal{K}}
\newcommand{\Loss}{\mathrm{Loss}}
\newcommand{\FFF}{\mathcal{F}}
\theoremstyle{plain}
\newtheorem{theorem}{Theorem}[section]
\newtheorem{proposition}[theorem]{Proposition}
\theoremstyle{definition}
\newtheorem{protocol}[theorem]{Protocol}
\theoremstyle{remark}
\newtheorem{remark}[theorem]{Remark}
\newlength{\IndentI}
\newlength{\IndentII}
\newlength{\IndentIII}
\newlength{\IndentIV}
\newlength{\WidthI}
\newlength{\WidthII}
\newlength{\WidthIII}
\newlength{\WidthIV}
\newcommand{\indentI}{\noindent\hspace*{\IndentI}}
\newcommand{\indentII}{\noindent\hspace*{\IndentII}}
\newcommand{\indentIV}{\noindent\hspace*{\IndentIV}}
\begin{document}
\title{Logic of subjective probability}
\author{Vladimir Vovk}
\maketitle
\begin{abstract}
  In this paper I discuss both syntax and semantics of subjective probability.
  The semantics determines ways of testing probability statements.
  Among important varieties of subjective probabilities
  are intersubjective probabilities and impersonal probabilities,
  and I will argue that well-tested impersonal probabilities acquire features of objective probabilities.
  Jeffreys's law, my next topic, states that two successful probability forecasters
  must issue forecasts that are close to each other,
  thus supporting the idea of objective probabilities.
  Finally, I will discuss connections between subjective and frequentist probability.

  The version of this paper at \url{http://vovk.net/lop/}
  is updated most often.
\end{abstract}

\setlength{\epigraphwidth}{0.5\textwidth} 
\epigraph{It may well be the fate of any theory we can ever devise to be eventually falsified;
  we can only hope that our currently unfalsified theory or theories will prove useful
  for understanding and predicting the world, during their limited lifespan.}%
  {A. Philip Dawid \cite[Sect.~2.2]{Dawid:2004}}

\section{Introduction}

The logic of probability,
as understood in this paper,
consists of two main parts, syntax and semantics.
(This notion of logic agrees with the standard understanding in mathematical logic
\cite{Shoenfield:1967}
and was used in my earlier paper \cite{Vovk:1993}.)
The syntax of probability is its mathematical language,
such as Kolmogorov's axioms \cite{Kolmogorov:1933}.
Mathematicians have been very successful in developing
probability theory as a mathematical theory
without worrying about its connections with reality.

The semantics of probability is about the meaning of probability statements.
When are they true and when are they false?
In more pragmatic terms,
how do we verify and falsify probability statements?
Falsification is often easier than verification,
especially for probability statements.
Popper \cite{Popper:1959} emphasized the importance of falsification
and difficulties of verification even for scientific statements
that do not involve probabilities.
For probability statements,
it is commonplace in statistical hypothesis testing
that while we are often justified in rejecting probability statements
(the probability of being wrong is under our control),
in non-trivial cases we are not justified in accepting them.
Therefore, my emphasis in discussing semantics will be on falsification.

There will be little originality in this paper.
It describes my understanding of ideas developed
in collaboration with Glenn Shafer
and Philip Dawid (Philip's prequential principle inspired all these developments).
For fuller expositions,
see the books \cite{Shafer/Vovk:2001} and \cite{Shafer/Vovk:2019},
Dawid's position statement \cite{Dawid:2004},
and Shafer's discussion paper \cite{Shafer:2021}.

I start in Sect.~\ref{sec:syntax} from the syntax of probability.
Two natural ways to define the syntax of probability
are the standard measure-theoretic (Kolmogorov's axioms \cite{Kolmogorov:1933})
and an equally old but less popular game-theoretic \cite{Shafer/Vovk:2019}.
In this paper I will concentrate on the game-theoretic picture,
which explicitly involves time from the very beginning.
The game-theoretic picture is in some respects simpler than measure-theoretic.
We typically consider protocols with three players.
Reality is the basic player producing a stream of observations.
Forecaster gives predictions for future observations;
it represents the syntax.
The semantics is represented by Sceptic,
whose role is to test Forecaster's predictions.
Semantics is discussed in Sect.~\ref{sec:semantics}
and is based on what we call the martingale interpretation of probability.

In Sect.~\ref{sec:objective} I will introduce intersubjective,
impersonal, and objective probabilities,
illustrating them with several examples.
Jeffreys's law and frequentist probability
will be treated in Sects~\ref{sec:Jeffreys} and~\ref{sec:frequentist},
respectively.

This paper has been prepared to accompany my talk at MSP 2023,
the third workshop on ``The Mathematics of Subjective Probability''
(University of Milano-Bicocca, 11 September 2023).

\section{Syntax of probability}
\label{sec:syntax}

In this paper we will be interested in repeated prediction,
which covers a wide range of applications in science and statistics,
as will be discussed later.
Our task is to predict observations $y_1,y_2,\dots$,
which are elements of a measurable space $\mathbf{Y}$,
usually assumed finite (with discrete $\sigma$-algebra) in this paper;
the predictions are often probability measures $P_1,P_2,\dots$ on $\mathbf{Y}$.

The most standard syntax is provided by measure-theoretic probability \cite{Kolmogorov:1933}
(which does not even require repeated prediction,
although we are only interested in this case).
Under Kolmogorov's axioms,
there is an overarching probability measure $Q$ on $\mathbf{Y}^{\infty}$,
and $P_n$ is computed from $Q$ as the conditional probability for $y_n$
given the past (including $y_1,\dots,y_{n-1}$).

A less standard syntax is given by the following \emph{one-step-ahead prediction protocol}
(in which $\mathfrak{P}(\mathbf{Y})$ stands for the set of all probability measures on $\mathbf{Y}$).
\begin{protocol}\label{prot:basic}\ \\
  \indentI FOR $n=1,2,\dots$:\\
    \indentII Forecaster announces $P_n\in\mathfrak{P}(\mathbf{Y})$\\
    \indentII Reality announces the actual observation $y_n\in\mathbf{Y}$.
\end{protocol}

\noindent
One-step-ahead prediction is standard in game-theoretic probability
\cite[Sect.~7]{Vovk/Shafer:2023}
and was the only kind of prediction considered
in \cite{Vovk:1993,Dawid/Vovk:1999,Shafer/Vovk:2001,Shafer/Vovk:2019}.
One-step-ahead prediction covers important applications
(see, e.g., the examples in Sect.~\ref{sec:objective}),
but in this paper we will also discuss more general cases
(as in \cite{Vovk:2023Bayesian}).

The most basic such case is two-steps-ahead point prediction,
described by the following protocol:
\begin{protocol}\label{prot:2-steps-point}\ \\
  \indentI FOR $n=1,2,\dots$:\\
    \indentII Forecaster announces $a_n,b_n\in\R$\\
    \indentII Reality announces the actual observation $y_n\in\R$.
\end{protocol}

\noindent
Intuitively, $a_n$ is the prediction for the next step
and $b_n$ for the step after that.
This is what we typically see when we use a weather app
(where the maximal temperature $y_n$ can be predicted for, say, 7 days ahead).

We can have similar protocols for probabilistic predictions
two (more generally, $K$) steps ahead.
There are two varieties of such protocols,
for marginal and for joint probabilities.
(The latter are treated in detail in \cite{Vovk:2023Bayesian}.)
In the case of marginal probabilistic prediction,
the protocol is (in the two-steps-ahead case):

\begin{protocol}\label{prot:2-steps-margin}\ \\
\indentI FOR $n=1,2,\dots$:\\
  \indentII Forecaster announces $P_n,Q_n\in\mathfrak{P}(\mathbf{Y})$\\
  \indentII Reality announces the actual observation $y_n\in\mathbf{Y}$.
\end{protocol}

\noindent
Now $P_n$ is the prediction for the next trial
and $Q_n$ for the trial after that.
Let us assume that $\mathbf{Y}$ is a finite set.

Protocol~\ref{prot:2-steps-margin} is a natural modification of Protocol~\ref{prot:2-steps-point},
but if $\mathbf{Y}$ is a small set (e.g., of cardinality 2),
we might want to have a joint prediction for $(y_{n+1},y_{n+2})$
rather than separate predictions for $y_{n+1}$ and $y_{n+2})$.
This leads to the following protocol:

\begin{protocol}\ \\
\indentI FOR $n=1,2,\dots$:\\
  \indentII Forecaster announces $P_n\in\mathfrak{P}(\mathbf{Y}^2)$\\
  \indentII Reality announces $y_n\in\mathbf{Y}$.
\end{protocol}

\noindent
Now $P_n$ is the joint prediction for the next two trials.

Let me give an example of a more complex protocol
that involves, additionally, an element of decision making.
(After all, the focus of the day of my talk, 11 September 2023,
is supposed to have decision theory as its focus.)
Let $K\in\{1,2,\dots\}$;
the following protocol describes predicting $K$ steps ahead.

\begin{protocol}\label{prot:DM}\ \\
  \indentI FOR $n=1,2,\dots$:\\
    \indentII Reality announces $\lambda_n:\mathbf{D}\times\mathbf{Y}^{K}\to[0,1]$\\
    \indentII Forecaster announces $P_n\in\mathfrak{P}(\mathbf{Y}^{K})$\\
    \indentII Decision Maker announces $d_n\in\mathbf{D}$\\
    \indentII Reality announces $y_n\in\mathbf{Y}$.
\end{protocol}

\noindent
There is a new player, Decision Maker,
who chooses moves (her decisions) from a set $\mathbf{D}$ of available decisions;
we will assume that the set $\mathbf{D}$ is finite.
At the beginning of each step $n$ Reality announces a loss function $\lambda_n$
determining Decision Maker's loss
$\lambda_n(d_n,y_n\dots y_{n+K-1})$,
which depends on the observations $y_i$ output by Reality over the next $K$ steps.
We assume that the loss function is bounded and normalized to take values in the interval $[0,1]$.

A natural strategy for Decision Maker in this protocol is the \emph{Bayes optimal strategy}
\begin{equation}\label{eq:B}
  d_n
  \in
  \arg\min_{d\in\mathbf{D}}
  \sum_{x\in\mathbf{Y}^{K}}
  \lambda_n(d,x) P_n(\{x\})
\end{equation}
(to break ties, if any,
let us set $d_n$ to the smallest element of the $\arg\min$
in a fixed linear order on $\mathbf{D}$).
However, without some semantics of probability,
we cannot say anything about the quality of this strategy.

\begin{remark}
  It is not immediately clear from the formal description
  of Protocol~\ref{prot:DM} that, intuitively,
  we do not assume that $y_n$ is the only information that Forecaster gets on step $n$.
  If $y_n$ is the only information and $K=\infty$,
  it is usually assumed that the forecasts $P_n$ are updated by conditioning,
  \begin{equation}\label{eq:updating}
    P_{n+1}(A)
    :=
    \frac{P_{n}(y_{n}A)}{P_{n}(y_{n}\mathbf{Y}^{\infty})}
  \end{equation}
  for each measurable $A\subseteq\mathbf{Y}^{\infty}$
  (remember that $\mathbf{Y}$ is finite).
  The principle that the new observations $y_n$
  are the only new information is known as the \emph{principle of total evidence} \cite{Shafer:1985}.
  Lewis \cite{Lewis:1999} derives updating via conditioning
  using his diachronic Dutch Book argument,
  which implicitly uses the principle of total evidence.
  If $K<\infty$ (which is the case we are interested in),
  we have to modify \eqref{eq:updating},
  but the marginal $P_{n+1}$-probability for the first $K-1$ observations
  is still obtained by conditioning $P_n$ on observing $y_n$.
  However, there are many cases where the principle of total evidence does not hold.
  Consider, e.g., predicting rain $K$ days ahead by a national weather forecaster,
  such as Servizio Meteorologico.
  Even if we are recording only the fact of rain for each day,
  $\mathbf{Y}=\{\text{rain},\text{dry}\}$,
  Forecaster obtains plenty of other data from almost 200 weather stations.
\end{remark}

\section{Criterion of empirical adequacy (semantics)}
\label{sec:semantics}

In principle,
it is up to the probability forecaster to explain the meaning of his or her forecasts.
Ideally, the forecaster should tell us under what circumstances
they will accept that their forecasts have been poor.
However, at this time forecasts issued by people and organizations
(such as weather forecasts)
are rarely (or never) accompanied by such explanations.
The \emph{martingale interpretation of probability},
to be explained shortly, has been proposed
as a default understanding of the meaning of probability forecasts.

The martingale interpretation of probability starts from a way of testing probability forecasts,
and this then determines their empirical content.
In order to test Forecaster,
we introduce another player, Sceptic,
who tests Forecaster in a very traditional way
\cite{Ramsey:1926,deFinetti:1937},
by betting against him.
For example, Protocol~\ref{prot:basic} extends to the following testing protocol.

\begin{protocol}\label{prot:testing}\ \\
  \indentI $\K_0 := 1$\\
  \indentI FOR $n=1,2,\dots$:\\
    \indentII Forecaster announces $P_n\in\mathfrak{P}(\mathbf{Y})$\\
    \indentII Sceptic announces measurable $f_n:\mathbf{Y}\to[0,\infty]$
        such that $P_n(f_n) = 1$\\
    \indentII Reality announces the observation $y_n\in\mathbf{Y}$\\
    \indentII $\K_n := \K_{n-1} f_n (y_n)$.
\end{protocol}

I will refer to $\K_n$ as Sceptic's capital at time $n$,
but this is an imaginary gambling picture.
Sceptic earns ``paper money'',
and the real meaning of $\K_n$ is the degree to which the forecasts have been falsified.
Sceptic start from a unit capital, $\K_0 = 1$,
and we do not allow his capital to become negative
(as soon as it does, the game is stopped and the attempt at discrediting Forecaster fails).
At each step $n$ Sceptic gambles using his available capital,
and the gamble is fair in that the expected relative increase in his capital is 1:
$P_n(f_n) = 1$, where $P_n(f_n)$ is de Finetti's notation for $\int f_n \dd P_n$.

What values of $\K_n$ are regarded as discrediting Forecaster?
A possible scale is the one suggested
by Harold Jeffreys \cite[Appendix B]{Jeffreys:1961} for Bayes factors:
\begin{itemize}
\item
  If $\K_n<1$, there is no evidence against Forecaster.
\item
  If $\K_n\in(1,\sqrt{10})\approx(1,3.16)$,
  the evidence against Forecaster is ``not worth more than a bare mention''.
\item
  If $\K_n\in(\sqrt{10},10)\approx(3.16,10)$,
  the evidence against Forecaster is \emph{substantial}.
\item
  If $\K_n\in(10,10^{3/2})\approx(10,31.6)$,
  the evidence against Forecaster is \emph{strong}.
\item
  If $\K_n\in(10^{3/2},100)\approx(31.6,100)$,
  the evidence against Forecaster is \emph{very strong}.
\item
  If $\K_n>100$,
  the evidence against Forecaster is \emph{decisive}.
\end{itemize}

If Sceptic follows some strategy
(a function of Reality's and Forecaster's previous moves),
Sceptic's capital becomes a function of Reality's and Forecaster's moves,
and such functions are referred to as \emph{game-theoretic martingales}
(also known as \emph{farthingales} \cite[Sect.~3.9]{Dawid/Vovk:1999}).
This is the origin of the name ``martingale interpretation of probability'',
but testing can also happen on the fly, without Sceptic following a strategy.

\begin{remark}
  In applications,
  for the testing procedure to be valid
  it is important that the observations should be recorded fully according to a pre-specified protocol,
  without the kinds of cheating discussed by de Finetti in \cite[Sect.~1.4]{deFinetti:2017}.
\end{remark}

\begin{remark}
  Bruno de Finetti widely used the principle of \emph{coherence},
  one version of which is not being susceptible to a Dutch Book;
  see, e.g., \cite{deFinetti:1937} and \cite[Appendix, Sects~16 and~17]{deFinetti:2017}.
  It is a condition of consistency of forecasts made.
  Coherence is an important ingredient of the martingale interpretation of probability
  (and is discussed in detail in \cite{Shafer/Vovk:2019}),
  but per se it does not establish any connections between forecasts and reality.
\end{remark}

To connect the game-theoretic picture of Protocol~\ref{prot:testing}
with the standard stochastic picture,
let us consider the case where there is a ``true probability measure'' $Q$
in the background,
Reality generates her moves from $Q$,
and Forecaster computes his predictions also from $Q$.
(As in Dawid's super-strong prequential principle \cite[Sect.~5.2]{Dawid/Vovk:1999}.)
Formally, $Q$ is a probability measure on a measurable space $(\Omega,\FFF)$
equipped with a filtration $\FFF_0\subseteq\FFF_1\subseteq\dots\subseteq\FFF$,
$y_n$ is the outcome of an $\FFF_n$-measurable random variable $Y_n$,
and $P_n$ is a regular conditional distribution under $Q$ of $Y_n$ given $\FFF_{n-1}$
(assumed to exist).
Then, for a measurable strategy for Sceptic,
his capital will form a measure-theoretic test martingale,
where the adjective ``test'' refers to the initial value of the martingale being 1
and the martingale taking only nonnegative values.
To translate Jeffreys's thresholds into the usual thresholds
used in classical hypothesis testing,
we can use Ville's inequality \cite[p.~100]{Ville:1939}:
for any $c\ge1$,
\[
  Q(\sup_n \K_n\ge c)
  \le
  1/c
\]
For example,
the probability that Sceptic ever finds decisive evidence against Forecaster
is at most 1\%.
Therefore, we can regard the martingale interpretation of probability
as a betting version of Cournot's principle \cite{Shafer:2007},
saying that prespecified events of low probability are not expected to happen.

\subsection{Other interpretations of probability}
\label{subsec:other}

There is another version of our criterion of Forecaster's empirical adequacy,
the infinitary one:
we regard Forecaster to be discredited at time $\infty$
when $\K_n\to\infty$ as $n\to\infty$.
It was used in \cite{Dawid/Vovk:1999} and discussed in \cite[Sect.~1.3]{Shafer/Vovk:2001}.
It is clear that it is not applicable in practice.

\begin{remark}
  The martingale interpretation of probability can be regarded
  as a betting variety of Popper's propensity interpretation of probability,
  even though the terminology used in this paper is very different from Popper's.
  Popper's interpretation, however, has an important difference:
  it relies on what he calls ``a quasi-Cournot principle''
  \cite[Sect.~24, point (c)]{Popper:1983},
  which differs from Cournot's principle by replacing probabilities close to 0
  by probabilities exactly equal to 0.
  Probabilities exactly equal to 0 were also considered by Dawid
  \cite[Sect.~3.3.4]{Dawid:2004}.
\end{remark}

\begin{remark}\label{rem:psychology}
  The most popular semantics for subjective probabilities is a psychological one:
  e.g., a probability distribution may correspond to the state of the mind of a subject,
  and a putative correspondence may be tested by asking the subject
  to accept or reject various bets.
  We are not discussing it in this paper,
  but the literature on this topic is extensive,
  starting from Ramsey \cite{Ramsey:1926} and de Finetti (see, e.g., \cite{deFinetti:1937,deFinetti:2017}).
\end{remark}

\begin{remark}\label{rem:logical}
  The logical interpretation of probability
  (see, e.g., Keynes \cite{Keynes:1921} and Carnap \cite{Carnap:1950})
  is in one respect similar to the psychological interpretation
  discussed in the previous remark.
  Keynes and Carnap argued that there are objective ways to attach probabilities
  to logical statements
  (given other statements, those describing evidence).
  Neither psychological nor logical interpretation
  attempt to connect probabilities with the outcomes of the events they describe.
  Nowadays the logical interpretation
  appears to be less popular than its main competitors.
  The objective Bayesian view
  (such as Jeffreys's \cite{Jeffreys:1961}; see also \cite{Berger:2006})
  can be regarded as a special case of the logical interpretation.
\end{remark}

The frequentist interpretation will be discussed in Sect.~\ref{sec:frequentist}.

\subsection{Two-steps-ahead prediction}
\label{subsec:2-steps}

One-step-ahead prediction is fundamental:
e.g., one step ahead is implicit in the notion of a martingale in probability theory
\cite[Chap.~7]{Shiryaev:2019}.
It's been the focus for game-theoretic probability so far
\cite[Sect.~7]{Vovk/Shafer:2023}.

Two-steps-ahead prediction is more complicated,
and to have a clear understanding of the process of prediction
a financial picture is very helpful.
We complement the basic forecasting protocol
with a ``market'' allowing the third player, Sceptic,
to trade in futures contracts
(these are the most standard financial derivatives;
see, e.g., \cite[Chap.~2]{Hull:2021} and \cite{Duffie:1989}).
Futures contracts is an old idea (see, e.g., \cite{Schaede:1989})
that arose gradually in financial industry,
but in our prediction protocols it will be a powerful way
of reducing two-steps-ahead prediction
(more generally, prediction multiple steps ahead)
to one-step-ahead prediction.

An extension of Protocol~\ref{prot:2-steps-point} including testing is:

\begin{protocol}\label{prot:2-steps-point-test}\ \\
  \indentI $\K_0 := 1$\\
  \indentI Forecaster announces $a_1,b_1\in\R$\\
  \indentI Sceptic announces $A_1,B_1\in\R$\\
  \indentI Reality announces $y_1\in\R$\\
  \indentI $\K'_1 := \K_0 + A_1(y_1-a_1)$\\
  \indentI FOR $n=2,3,\dots$:\\
    \indentII Forecaster announces $a_n,b_n\in\R$\\
    \indentII $\K_{n-1} := \K'_{n-1} + B_{n-1}(a_n-b_{n-1})$
        \hfill\refstepcounter{equation}(\theequation)\label{eq:point-K-P}\\
    \indentII Sceptic announces $A_n,B_n\in\R$\\
    \indentII Reality announces $y_n\in\R$\\
    \indentII $\K'_n := \K_{n-1} + A_n(y_n-a_n)$.
        \hfill\refstepcounter{equation}(\theequation)\label{eq:point-K-y}
\end{protocol}

\noindent
At step $n$ Forecaster announces the prices $a_n$ and $b_n$
for two futures contracts, $\Phi_n$ and $\Phi_{n+1}$, respectively.
The futures contract $\Phi_n$, $n=1,2,\dots$, pays $F_n^+:=y_n$ at the end of step $n$.
Its price is first announced at step $n-1$ (unless $n=1$) as $F_n^-:=b_{n-1}$
and then revised at step $n$ to $F_n:=a_n$.
At step $n$ Sceptic takes positions $A_n$ and $B_n$ in the two futures contracts,
$\Phi_n$ and $\Phi_{n+1}$ respectively,
that are traded at step $n$.
The change $\K_n-\K_{n-1}$ in Sceptic's capital between steps $n$ and $n+1$
has two components:
in \eqref{eq:point-K-y} we settle the futures contract $\Phi_n$
getting
\[
  F_n^+ - F_n
  =
  y_n - a_n
\]
per contract (this is final settlement),
and in \eqref{eq:point-K-P} of the following step
we settle the futures contract $\Phi_n$
(a different one since $n$ is now larger by 1)
getting
\[
  F_n - F_n^-
  =
  a_n - b_{n-1}
\]
per contract (this is intermediate settlement),
which agrees with Protocol~\ref{prot:2-steps-point-test}.

Notice that at the $n$th iteration of the FOR loop
in Protocol~\ref{prot:2-steps-point-test}
both increases in capital come from the futures contract $\Phi_n$:
first intermediate in \eqref{eq:point-K-P}
and then final in \eqref{eq:point-K-y}.
The initial price $F_n^-$ of $\Phi_n$ is announced at the previous iteration.

Protocols~\ref{prot:2-steps-margin}--\ref{prot:DM} can be extended
in similar ways to allow testing.
For example, in the case of Protocol~\ref{prot:DM}
we allow Sceptic to trade in future contracts $\Phi(x)$,
where $x\in\mathbf{Y}^*$ and $n:=\lvert x\rvert>0$;
$\Phi(x)$ pays 1 at the end of step $n$ if and only if $y_1\dots y_n=x$,
and its price is first announced at step $(n-K+1)\vee1$.
See \cite{Vovk:2023Bayesian} for details.

\section{Emergence of objective probability}
\label{sec:objective}

A possible source of probabilities $P_n$ in Protocol~\ref{prot:basic} is a person,
in which case the probabilities are, of course, subjective.
However, another possibility is where these probabilities
are shared by a number of individuals,
in which case they are often referred to as \emph{intersubjective}
\cite{Dawid:1982intersubjective,Gillies:1991}.

A particularly interesting kind of intersubjective probabilities
are those that are output by strategies for Forecaster
that are spelled out and described in books;
see below for examples.
Let us call them \emph{impersonal probabilities}.
They reside not only in individual human minds but also in, e.g., libraries
(becoming part of what Popper referred to as World 3 \cite{Popper:1979}).

Impersonal probabilities approach objective probabilities
when they are described clearly
(in particular, we should understand clearly when they are applicable)
and when they are well-tested
(there have been serious falsification attempts using various strategies for Sceptic,
which did not succeed).
Well-tested strategies for Forecaster may be referred to as probabilistic theories.
They are empirical theories that can be used for prediction.

\begin{remark}
  I am not aware of any examples where subjective or even intersubjective probabilities
  could be called objective without stretching the meaning of ``objective'' too much.
  They are too unstable for this;
  the quality of individuals' or groups' predictions
  could drop suddenly for all kinds of reasons.
\end{remark}

Let me give three examples of such well-tested impersonal probabilities;
many more can be found in, e.g.,
\cite[Sect.~8.4]{Shafer/Vovk:2001} and \cite[Chap.~10]{Shafer/Vovk:2019}.

\subsection{Classical probability}
\label{subsec:classical}

This probabilistic theory is rarely mentioned in books,
because of its unscientific character,
but the classical definition of probability
is a good example of a probabilistic theory \cite[Sect.~2]{Vovk:1993}.

The theory covers a range of devices
(many of which invented for gambling)
that produce one of $m\in M$ outcomes with equal probabilities.
For example, $m=2$ for fair coins, $m=6$ for fair dice, and $m=37$ for fair roulette wheels.
There is no need to distinguish between devices with the same $m$,
and we let $M$ stands for the set of available $m$.
Let us, e.g., set $M:=\{2,6,37\}$ ignoring the devices different from coins, dice, and roulette wheels.
Let us number the outcomes of a device with $m$ possible outcomes starting from 0,
so that the outcomes are $\{0,\dots,m-1\}$
(this convention is standard for coins and roulette wheels but not for dice).

\begin{protocol}\label{prot:classical}\ \\
  \indentI $\K_0:=1$\\
  \indentI FOR $n=1,2,\dots$:\\
    \indentII Forecaster announces $m_n\in M$\\
    \indentII Sceptic announces $f_n:\{0,\dots,m_n-1\}\to[0,\infty]$\\
        \indentIV such that $\frac{1}{m_n}\sum_{y=0}^{m_n-1}f_n(y)=1$\\
    \indentII Reality announces the actual observation $y_n\in\{0,\dots,m_n-1\}$\\
    \indentII $\K_n:=\K_{n-1} f_n(y_n)$.
\end{protocol}

\noindent
Protocol~\ref{prot:classical} describes the process of testing
the classical theory of probability as empirical theory.
The theory depends on a clear understanding of what counts
as a fair coin, die, or roulette wheel,
but modulo such an understanding
it is an empirical statement about reality that can be tested.

\subsection{Genetics}

A more scientific example is the theory of inheriting Daltonism
\cite[Sect.~10.3]{Shafer/Vovk:2019}.
(Daltonism is a form of red-green colour-blindness,
but this mechanism is applicable to numerous other conditions.)

\begin{protocol}\label{prot:Dalton}\ \\
  \indentI $\K_0:=1$\\
  \indentI FOR $n=1,2,\dots$:\\
    \indentII Reality announces $\text{F}_n\in\{\text{N},\text{A}\}$,
        $\text{M}_n\in\{\text{N},\text{C},\text{A}\}$,
        and $\text{S}_n\in\{\text{B},\text{G}\}$\\
    \indentII Forecaster announces $p_n:=p_{\text{F}_n\text{M}_n\text{S}_n}\in[0,1]$\\
    \indentII Sceptic announces $M_n\in\R$\\
    \indentII Reality announces $y_n\in\{0,1\}$\\
    \indentII $\K_n:=\K_{n-1} + M_n(y_n-p_n)$.
\end{protocol}

\noindent
Protocol~\ref{prot:Dalton} describes recording information about a sequence of births.
For each birth, we record the status $\text{F}_n$ of the Father towards Daltonism (Normal or Affected),
the status $\text{M}_n$ of the Mother (Normal, Carrier, or Affected),
the Sex $\text{S}_n$ of the child (Boy or Girl),
and whether the child is affected (1 if ``yes'' and 0 if ``not'').
The genetic theory of inheriting Daltonism gives
the following strategy for Forecaster:
\begin{align*}
  p_{\text{NNB}} = p_{\text{NNG}} = p_{\text{NCG}} = p_{\text{NAG}} = p_{\text{ANB}} = p_{\text{ANG}} &= 0\\
  p_{\text{NCB}} = p_{\text{ACB}} = p_{\text{ACG}} &= 1/2\\
  p_{\text{NAB}} = p_{\text{AAB}} = p_{\text{AAG}} &= 1.
\end{align*}
Protocol~\ref{prot:Dalton} allows us to test the theory and,
after the theory has been well-tested, to use it for prediction.

\subsection{Quantum computing}

Another particularly simple probabilistic theory
is the theory of quantum computing as presented in \cite[Sect.~10.6]{Shafer/Vovk:2019}.

\section{Jeffreys's law}
\label{sec:Jeffreys}

Suppose we have two forecasters,
Forecaster I and Forecaster II,
who announce their predictions
for Reality's move $y_n$ at the same time.
Is it possible for both of them to perform well
while giving very different predictions?
Such a possibility would be worrying
for the picture of emerging objective probabilities
that we discussed in the previous section.
The following result,
which is a version of \emph{Jeffreys's law}
\cite[Sect.~5.2]{Dawid:1984},
shows that this is not possible;
for simplicity, we assume $\lvert\mathbf{Y}\rvert<\infty$
and assume that the two Forecasters output positive probabilities.

\begin{theorem}\label{thm:geometric}
  Suppose Sceptic gambles separately against two Forecasters
  outputting predictions $P\I_n$ and $P\II_n$.
  There is a gambling strategy leading to the capitals
  $\K\I_n$ and $\K\II_n$ satisfying
  \begin{equation}\label{eq:geometric}
    \sqrt{\K\I_n \K\II_n}
    =
    \prod_{i=1}^n
    \frac{1}{H(P\I_i,P\II_i)},
  \end{equation}
  where the \emph{Hellinger integral} $H$ is defined by
  \begin{equation}\label{eq:Hellinger}
    H(P,Q)
    :=
    \sum_{y\in\mathbf{Y}}
    \sqrt{P(\{y\}) Q(\{y\})}
    \in
    [0,1].
  \end{equation}
\end{theorem}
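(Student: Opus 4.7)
The plan is to construct, at each round $n$, a pair of gambles $(f\I_n, f\II_n)$ with the special property that $f\I_n(y)\, f\II_n(y)$ does not depend on $y$. If this holds, then at every step $\sqrt{f\I_n(y_n) f\II_n(y_n)}$ is a deterministic function of $P\I_n, P\II_n$ alone, and taking the product over $i=1,\dots,n$ converts the geometric mean of capitals into a deterministic product over the forecasts, as required by \eqref{eq:geometric}. This reduces the theorem to a single step calculation.

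To find the right gambles, I would pose the problem as follows. For Protocol~\ref{prot:testing} applied separately against Forecaster I and Forecaster II, Sceptic must choose $f\I_n\ge0$ with $P\I_n(f\I_n)=1$ and $f\II_n\ge0$ with $P\II_n(f\II_n)=1$. Guided by the Hellinger integral, I would try the ansatz
\[
  f\I_n(y) := \frac{1}{H(P\I_n,P\II_n)}\sqrt{\frac{P\II_n(\{y\})}{P\I_n(\{y\})}},
  \qquad
  f\II_n(y) := \frac{1}{H(P\I_n,P\II_n)}\sqrt{\frac{P\I_n(\{y\})}{P\II_n(\{y\})}},
\]
which is well-defined because both forecasters output strictly positive probabilities on the finite set $\mathbf{Y}$ (and this positivity also gives $H(P\I_n,P\II_n)>0$). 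A direct computation
\[
  P\I_n(f\I_n)
  =\sum_{y\in\mathbf{Y}} \frac{P\I_n(\{y\})}{H(P\I_n,P\II_n)} \sqrt{\frac{P\II_n(\{y\})}{P\I_n(\{y\})}}
  =\frac{1}{H(P\I_n,P\II_n)}\sum_{y\in\mathbf{Y}} \sqrt{P\I_n(\{y\})P\II_n(\{y\})}
  =1
\]
confirms fairness for Forecaster I, and the symmetric computation handles Forecaster II.

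With these choices, $f\I_n(y)\, f\II_n(y) = H(P\I_n,P\II_n)^{-2}$ identically in $y$. Since by the testing protocol $\K\I_n = \prod_{i=1}^n f\I_i(y_i)$ and $\K\II_n = \prod_{i=1}^n f\II_i(y_i)$, I would conclude
\[
  \sqrt{\K\I_n \K\II_n}
  = \prod_{i=1}^n \sqrt{f\I_i(y_i)\, f\II_i(y_i)}
  = \prod_{i=1}^n \frac{1}{H(P\I_i,P\II_i)},
\]
which is \eqref{eq:geometric}.

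The only genuine difficulty is guessing the right form of the gambles: once one notices that the desired identity forces $f\I f\II$ to be constant in $y$, the Cauchy--Schwarz-like normalization by $H$ is essentially the unique natural choice, and the remaining steps are routine. I would also remark that extending beyond the positivity assumption requires only standard conventions on $0/0$ (replacing the ratios by likelihood ratios on the common support), but the positivity hypothesis stated in the theorem makes the construction entirely elementary.
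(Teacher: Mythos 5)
Your proposal is correct and uses exactly the same gambles as the paper's proof: $f\I_n=\sqrt{P\I_nP\II_n}/(H_nP\I_n)$ and $f\II_n=\sqrt{P\I_nP\II_n}/(H_nP\II_n)$, i.e.\ betting the normalized geometric mean of the two forecasts against each Forecaster, with the same observation that the per-step geometric mean of the capital gains is the deterministic quantity $1/H_n$. Your write-up just makes explicit the fairness verification $P\I_n(f\I_n)=1$ that the paper leaves implicit.
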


\noindent
The Hellinger integral \eqref{eq:Hellinger}
(also know as the Bhattacharyya coefficient)
is a measure of closeness between $P$ and $Q$,
and it is equal to 1 if and only if $P=Q$.
According to \eqref{eq:geometric},
at least one of the two Forecasters will be discredited
if they do not issue almost identical forecasts in the long run.

Formally, Theorem~\ref{thm:geometric} is a statement about the following protocol.
\begin{protocol}\label{prot:geometric}\ \\
  \indentI $\K_0 := 1$\\
  \indentI FOR $n=1,2,\dots$:\\
    \indentII Forecaster I announces $P\I_n\in\mathfrak{P}(\mathbf{Y})$\\
    \indentII Forecaster II announces $P\II_n\in\mathfrak{P}(\mathbf{Y})$\\
    \indentII Sceptic announces measurable
        $f\I_n:\mathbf{Y}\to[0,\infty]$ and $f\II_n:\mathbf{Y}\to[0,\infty]$\\
        \indentIV such that $P\I_n(f\I_n) = 1$ and $P\II_n(f\II_n) = 1$\\
    \indentII Reality announces the observation $y_n\in\mathbf{Y}$\\
    \indentII $\K\I_n := \K\I_{n-1} f_n (y_n)$\\
    \indentII $\K\II_n := \K\II_{n-1} f_n (y_n)$.
\end{protocol}

\begin{proof}[Proof of Theorem~\ref{thm:geometric}]
  Let us write $H_i$ for $H(P\I_i,P\II_i)$
  and let the values of Forecaster's predictions on the true observations be
  $p\I_i:=P\I_i(\{y_i\})$ and $p\II_i:=P\II_i(\{y_i\})$.
  Informally, Sceptic's strategy is to use the normalized geometric mean $\sqrt{P\I_i P\II_i}/H_i$
  as his alternative prediction.
  When playing against $P\I_i$, his bet is $f\I:=\sqrt{P\I_i P\II_i}/(H_i P\I_i)$,
  and when playing against $P\II_i$, his bet is $f\II:=\sqrt{P\I_i P\II_i}/(H_i P\II_i)$.
  The relative increase in the geometric mean of his capitals at step $i$ is
  \begin{equation*}
    \sqrt{\frac{\sqrt{p\I_i p\II_i}/H_i}{p\I_i}\frac{\sqrt{p\I_i p\II_i}/H_i}{p\II_i}}
    =
    \frac{1}{H_i}.
    \qedhere
  \end{equation*}
\end{proof}

It is interesting that, according to \eqref{eq:geometric},
the geometric mean $\sqrt{\K\I_n \K\II_n}$ can only increase.

As mentioned earlier,
the Hellinger integral $H(P\I_i,P\II_i)$ is a measure of similarity between $P\I_i$ and $P\II_i$.
The closely related \emph{Hellinger distance} $\rho_H(P,Q)$
between probability measures $P$ and $Q$ on $\mathbf{Y}$
is defined by
\[
  \rho_H(P,Q)
  =
  \sum_{y\in\mathbf{Y}}
  \left(
    \sqrt{P(\{y\})} - \sqrt{Q(\{y\})}
  \right)^2
  =
  2-2H(P,Q)
  \in
  [0,2].
\]
Therefore, \eqref{eq:geometric} combined with the inequality $\ln(1+x)\le x$ implies
\begin{equation}\label{eq:geometric-crude}
  \ln\K\I_n+\ln\K\II_n
  \ge
  \sum_{i=1}^n
  \rho_H(P\I_i,P\II_i).
\end{equation}
This, in turn, implies
\begin{equation*}
  \K\I_n+\K\II_n
  \ge
  2\sqrt{\K\I_n\K\II_n}
  \ge
  2
  \exp
  \left(
    \frac12
    \sum_{i=1}^n
    \rho_H(P\I_i,P\II_i)
  \right).
\end{equation*}
On the left-hand side of the last inequality we have Sceptic's total capital
when gambling against both Forecasters (starting from 2).
The total capital can go up or down, unlike $\sqrt{\K\I_n\K\II_n}$.

To obtain an inverse to Theorem~\ref{thm:geometric},
we need to show that if two Forecaster's output similar predictions,
we can gamble successfully against one of them
if and only if we can gamble successfully against the other.
This is vacuous if the two Forecasters output identical predictions,
and an ideal measure of similarity should be close
to the right-hand side of \eqref{eq:geometric}.
To state a simple non-asymptotic result
(Theorem~\ref{thm:optimality} below)
in this direction, we will need the following protocol.

\begin{protocol}\label{prot:optimality}\ \\
  \indentI $\K_0 := 1$\\
  \indentI FOR $n=1,2,\dots$:\\
    \indentII Forecaster I announces $P\I_n\in\mathfrak{P}(\mathbf{Y})$\\
    \indentII Forecaster II announces $P\II_n\in\mathfrak{P}(\mathbf{Y})$\\
    \indentII Sceptic I announces measurable $f\I_n:\mathbf{Y}\to[0,\infty]$
        such that $P\I_n(f\I_n) = 1$\\
    \indentII Sceptic II announces measurable $f\II_n:\mathbf{Y}\to[0,\infty]$
        such that $P\II_n(f\II_n) = 1$\\
    \indentII Reality announces the observation $y_n\in\mathbf{Y}$\\
    \indentII $\K\I_n := \K\I_{n-1} f\I_n (y_n)$\\
    \indentII $\K\II_n := \K\II_{n-1} f\II_n (y_n)$.
\end{protocol}

\noindent
(Protocol~\ref{prot:geometric} can be considered
as the special case of Protocol~\ref{prot:optimality}
in which the two Sceptics are playing as a team.)

\begin{theorem}\label{thm:optimality}
  There is a strategy for Sceptic II in Protocol~\ref{prot:optimality}
  that guarantees
  \begin{equation}\label{eq:optimality}
    \K\II_n
    \ge
    \sqrt{
      \K\I_n
      \prod_{i=1}^n
      \frac{1}{\chi(P\I_i,P\II_i)}
    },
  \end{equation}
  where the \emph{$\chi^2$ integral} $\chi(P,Q)$ is defined by
  \begin{equation}\label{eq:chi}
    \chi(P,Q)
    :=
    \sum_{y\in\mathbf{Y}}
    \frac{Q(\{y\})^2}{P(\{y\})}
    \in
    [1,\infty].
  \end{equation}
\end{theorem}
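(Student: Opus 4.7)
My plan is to reformulate the target inequality multiplicatively and then design Sceptic II's move at each step so that the per-step increase matches the factor $1/\chi(P\I_n,P\II_n)$. Squaring \eqref{eq:optimality} and dividing by $\K\I_n$ gives
\[
  \frac{(\K\II_n)^2}{\K\I_n} \;\ge\; \prod_{i=1}^n \frac{1}{\chi(P\I_i,P\II_i)},
\]
so it suffices, inductively, to arrange
\[
  \frac{(f\II_n(y_n))^2}{f\I_n(y_n)} \;\ge\; \frac{1}{\chi(P\I_n,P\II_n)}
  \quad\text{for every $y_n\in\mathbf{Y}$.}
\]
In other words, having observed $f\I_n$ (which is legal since Sceptic I moves before Sceptic II), Sceptic II should try to dominate the pointwise function $\sqrt{f\I_n(y)/\chi(P\I_n,P\II_n)}$ while still satisfying his budget constraint $P\II_n(f\II_n)=1$.

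The natural choice is to take $f\II_n$ proportional to $\sqrt{f\I_n}$, i.e.
\[
  f\II_n(y) \;:=\; \frac{\sqrt{f\I_n(y)}}{P\II_n\!\bigl(\sqrt{f\I_n}\bigr)},
\]
which is manifestly a legitimate bet under $P\II_n$. The key estimate is that the normalizing denominator is bounded above by $\sqrt{\chi(P\I_n,P\II_n)}$. This follows from the Cauchy--Schwarz inequality applied to the decomposition
\[
  P\II_n\!\bigl(\sqrt{f\I_n}\bigr)
  \;=\; \sum_{y\in\mathbf{Y}}
    \frac{P\II_n(\{y\})}{\sqrt{P\I_n(\{y\})}}\cdot\sqrt{P\I_n(\{y\})\,f\I_n(y)},
\]
which yields
\[
  \bigl(P\II_n(\sqrt{f\I_n})\bigr)^2
  \;\le\;
  \chi(P\I_n,P\II_n)\cdot P\I_n(f\I_n)
  \;=\;\chi(P\I_n,P\II_n).
\]
Squaring the definition of $f\II_n(y_n)$ and inserting this bound gives the per-step inequality above; multiplying over $i=1,\dots,n$ and taking the square root gives \eqref{eq:optimality}.

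The only real obstacle is handling degenerate cases — in particular, $f\I_n(y)$ taking the value $0$ or $+\infty$, and what happens when $\chi(P\I_n,P\II_n)=\infty$ (the latter makes the inequality vacuous at that step, so the strategy can simply pass). The standing assumption that both forecasters produce strictly positive measures on a finite $\mathbf{Y}$ rules out the most awkward issues, since then the denominator $P\II_n(\sqrt{f\I_n})$ is a finite positive number whenever $f\I_n\not\equiv 0$, which is forced by $P\I_n(f\I_n)=1$; zeros of $f\I_n$ are harmless because the per-step target inequality becomes $f\II_n(y_n)^2\ge 0$ at such $y_n$. I therefore expect the write-up to be essentially the two-line Cauchy--Schwarz calculation above, plus a brief sentence dismissing the degenerate cases.
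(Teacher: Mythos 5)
Your argument is correct, but it is genuinely different from the paper's. You use the single bet $f\II_n:=\sqrt{f\I_n}/P\II_n(\sqrt{f\I_n})$ and bound the normalizer by Cauchy--Schwarz, $P\II_n(\sqrt{f\I_n})\le\sqrt{\chi(P\I_n,P\II_n)\,P\I_n(f\I_n)}=\sqrt{\chi(P\I_n,P\II_n)}$, which yields the per-step guarantee $(f\II_n(y))^2\ge f\I_n(y)/\chi(P\I_n,P\II_n)$ and hence \eqref{eq:optimality} by multiplying over steps. The paper instead has Sceptic II split his unit capital into two halves: one half follows the $\chi^2$-normalized likelihood-ratio strategy $y\mapsto\frac{P\II_n(\{y\})}{P\I_n(\{y\})}\frac{1}{\chi(P\I_n,P\II_n)}$ and the other half replicates Sceptic I via $y\mapsto\frac{P\I_n(\{y\})}{P\II_n(\{y\})}f\I_n(y)$; with $X:=\prod_{i\le n}P\II_i(\{y_i\})/P\I_i(\{y_i\})$ this gives the exact identity $\K\II_n=\frac12X\prod_{i\le n}\chi(P\I_i,P\II_i)^{-1}+\frac12X^{-1}\K\I_n$, and the AM--GM inequality finishes. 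The two proofs are close cousins (Cauchy--Schwarz versus AM--GM, and both legitimately exploit the fact that Sceptic II moves after Sceptic I in Protocol~\ref{prot:optimality}), but the resulting strategies differ: yours delivers the bound factor-by-factor at every step, a slightly cleaner invariant, while the paper's mixture makes the two terms under the square root in \eqref{eq:optimality} visible as two sub-accounts and is the form that is generalized in \cite[Theorem~2]{Vovk:2009}. Your treatment of the degenerate cases is adequate under the paper's standing assumption that both Forecasters output positive probabilities on a finite $\mathbf{Y}$: then $f\I_n$ is finite everywhere and not identically zero, so the normalizer is a positive real number, and steps with $f\I_i(y_i)=0$ only make the right-hand side of \eqref{eq:optimality} vanish.
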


The inclusion ``${}\in[1,\infty]$'' in \eqref{eq:chi} follows
from the obvious equality
\begin{equation}\label{eq:different-chi}
  \chi(P,Q)
  =
  1 + \rho_{\chi}(P,Q),
\end{equation}
where $\rho_{\chi}$ is the \emph{$\chi^2$ distance}
\[
  \rho_{\chi}(P,Q)
  :=
  \sum_{y\in\mathbf{Y}}
  \frac{(P(\{y\})-Q(\{y\}))^2}{P(\{y\})}.
\]

\begin{proof}[Proof of Theorem~\ref{thm:optimality}]
  Sceptic II's strategy is to invest half of his initial capital
  in the strategy
  \[
    f\II_n(y)
    :=
    \frac{P\II_n(\{y\})}{P\I_n(\{y\})}
    \frac{1}{\chi(P\I_n,P\II_n)}
  \]
  and the other half in the strategy
  \[
    f\II_n(y)
    :=
    \frac{P\I_n(\{y\})}{P\II_n(\{y\})}
    f\I_n(y).
  \]
  Then, setting
  \[
    X
    :=
    \prod_{i=1}^n
    \frac{P\II_i(\{y_i\})}{P\I_i(\{y_i\})},
  \]
  we obtain
  \[
    \K\II_n
    =
    \frac12
    X
    \prod_{i=1}^n
    \frac{1}{\chi(P\I_i,P\II_i)}
    +
    \frac12
    \frac{1}{X}
    \K\I_n.
  \]
  Minimizing the right-hand side over $X$ gives \eqref{eq:optimality}.
\end{proof}

Combining \eqref{eq:optimality}, \eqref{eq:different-chi}, and the inequality $\ln(1+x)\le x$,
we obtain
\begin{equation}\label{eq:optimality-crude}
  \ln\K\II_n
  \ge
  \frac12
  \ln\K\I_n
  -
  \frac12
  \sum_{i=1}^n
  \rho_{\chi}(P\I_n,P\II_n).
\end{equation}
The guarantees \eqref{eq:geometric-crude} and \eqref{eq:optimality-crude}
involve different distances, $\rho_H$ vs $\rho_{\chi}$.
For $P$ and $Q$ that are close to each other,
the latter is approximately 4 times greater:
\begin{align*}
  \rho_{\chi}(P,Q)
  &=
  \sum_{y}
  \frac{(P(\{y\})-Q(\{y\}))^2}{P(\{y\})}\\
  &=
  \sum_{y}
  \left(
    \sqrt{P(\{y\})} - \sqrt{Q(\{y\})}
  \right)^2
  \left(
    1 + \sqrt{\frac{Q(\{y\})}{P(\{y\})}}
  \right)^2
\end{align*}
(and the last equality also shows that $\rho_H\le\rho_{\chi}$).

\bigskip

In scientific applications,
we might have a distinguishing experiment
between two competing probabilistic theories (impersonal Forecasters),
i.e., an experimental setting where the two theories give
different probability forecasts.
Repeating the experiment will allow us to boost any difference between the forecasts;
by Theorem~\ref{thm:geometric}, there is a strategy for Sceptic
leading to an exponential (in the number of repetitions) growth of his total capital
and, therefore, rejecting at least one of the theories.

For further information about Jeffreys's law,
see \cite[Sect.~10.7]{Shafer/Vovk:2019}.
In particular, the optimality, in some sense,
of the Hellinger distance in the context of Jeffreys's law
is shown in \cite[Proposition 10.18]{Shafer/Vovk:2019}.
For further information about Hellinger integral and distance, see \cite{Shiryaev:2016};
in my definitions and notation I mainly follow \cite[Sect.~31]{Borovkov:1998}.
For probability measures that are close to each other,
the standard Kullback--Leibler distance $\rho_K$
is midway between the Hellinger and $\chi^2$ distances
\cite[Sect.~31]{Borovkov:1998},
and $\sum_{i=1}^n\rho_K(P\I_i,P\II_i)$ describes, under natural conditions,
the asymptotic growth rate of Sceptic II's capital when Forecaster I is reliable.

The proof of Theorem~\ref{thm:geometric} follows \cite[proof of Proposition 10.17]{Shafer/Vovk:2019}
(which in turn follows similar proofs in \cite{Vovk:1987} and \cite{Vovk:2009}).
Theorem~\ref{thm:optimality} is generalized in \cite[Theorem 2]{Vovk:2009}.

\section{Frequentist probability}
\label{sec:frequentist}

In this section we will discuss the frequentist interpretation of probability.
It can be regarded as a corollary of the martingale interpretation.
While classical probability is a specific probabilistic theory
(see Sect.~\ref{subsec:classical}),
frequentist probability is a feature of any non-trivial probabilistic theory.
Numerous examples are given in \cite{Shafer/Vovk:2019},
and in this section we only discuss a couple of them.
They are applications of laws of large numbers,
of which we give two standard versions,
infinitary (Sect.~\ref{subsec:infinitary}) and finitary (Sect.~\ref{subsec:finitary}),
and a newer decision-theoretic version (Sect.~\ref{subsec:optimal}).

\subsection{Infinitary result}
\label{subsec:infinitary}

Let us start from stating an infinitary frequentist result
about forecasting bounded observations.

\begin{protocol}\label{prot:SLLN}\ \\
  \indentI $\K_0 := 1$\\
  \indentI FOR $n=1,2,\dots$:\\
    \indentII Forecaster announces $a_n\in\R$\\
    \indentII Sceptic announces $A_n\in\R$\\
    \indentII Reality announces the observation $y_n\in[0,1]$\\
    \indentII $\K_n := \K_{n-1} + A_n (y_n-a_n)$.
\end{protocol}

\begin{proposition}\label{prop:SLLN}
  There is a strategy for Sceptic that guarantees $\K_n\ge0$ for all $n$ and
  either $\K_n\to\infty$ or
  \begin{equation}\label{eq:SLLN}
    \frac1n \sum_{i=1}^n (y_i-a_i) \to 0
    \qquad
    (n\to\infty).
  \end{equation}
\end{proposition}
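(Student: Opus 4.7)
The plan is to detect any persistent bias in the forecasts via a countable family of exponential-betting sub-strategies and combine them by a ``cashing-in'' device into a single strategy whose capital tends to $\infty$ whenever the Ces\`aro average $\tfrac1n\sum_{i=1}^n(y_i-a_i)$ fails to tend to zero. Concretely, for each integer $k\ge 2$ and sign $s\in\{+,-\}$ I let $\sigma_k^s$ bet $A_n := s\K_{n-1}^{(k,s)}/k$, producing capital
\[
  \K_n^{(k,s)} \;=\; \prod_{i=1}^n \Bigl(1 + \tfrac{s(y_i-a_i)}{k}\Bigr).
\]
Assuming for now that $a_i \in [0,1]$, each factor lies in $[1/2,3/2]$, so $\K_n^{(k,s)}\ge 0$. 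Using $\log(1+x)\ge x-x^2$ (valid for $x\ge -1/2$) and $|y_i-a_i|\le 1$,
\[
  \log \K_n^{(k,\pm)} \;\ge\; \pm\,\frac{T_n}{k}\;-\;\frac{n}{k^2},
  \qquad T_n := \sum_{i=1}^n (y_i-a_i).
\]
Hence if $\limsup_n T_n/n > 0$, then for every sufficiently large $k$ the right-hand side with sign $+$ is $\Omega(n)$ along the subsequence on which $T_n$ exceeds $\tfrac12 n\limsup_n T_n/n$, so $\limsup_n \K_n^{(k,+)}=\infty$; the mirror argument handles $\liminf_n T_n/n<0$ via $\sigma_k^-$.

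The main obstacle is upgrading ``$\limsup=\infty$'' to the genuine ``$\lim=\infty$'' demanded by the statement. I would use the standard cashing-in construction: for each triple $(k,s,j)$ with $j\ge 1$, run a copy of $\sigma_k^s$ scaled to have initial capital $w_{k,s,j}>0$ (so all its bets are multiplied by $w_{k,s,j}$) and \emph{freeze} it---set its subsequent moves to $0$---the first time its capital reaches $1$. Because the scaled copy's capital equals $w_{k,s,j}\K_n^{(k,s)}$, freezing occurs at $\tau_{k,s,j} := \inf\{n : \K_n^{(k,s)} \ge 1/w_{k,s,j}\}$, which is finite whenever $\limsup_n \K_n^{(k,s)}=\infty$; after freezing, the copy retains capital $\ge 1$ forever. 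Choosing positive weights with $\sum_{k,s,j} w_{k,s,j}\le 1$ (e.g.\ $w_{k,s,j}\propto 2^{-k-j}$) and summing all the copies gives a single strategy $\sigma^\star$ whose capital at time $n$ is bounded below by the number of copies that have already frozen. When $T_n/n\not\to 0$, the previous paragraph supplies some $(k,s)$ with $\limsup_n\K_n^{(k,s)}=\infty$, which forces every copy $(k,s,j)$, $j\ge 1$, to freeze eventually, so $\K_n^{\sigma^\star}\to\infty$.

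One minor loose end: the exponential strategies above rely on $a_n\in[0,1]$, without which the factors $1+s(y_i-a_i)/k$ can turn negative. ``Unreasonable'' forecasts are themselves directly exploitable---for instance, if $a_n>1$ then $A_n=-\varepsilon\K_{n-1}$ earns Sceptic at least $\varepsilon(a_n-1)\K_{n-1}>0$---so I would fold in an additional sub-strategy that bets in the arbitrage direction whenever $a_n\notin[0,1]$. Either that contribution alone sends $\K_n\to\infty$, or the excursions outside $[0,1]$ are summable and absorbed into a bounded perturbation of the main argument. The conceptual heart of the proof, and the only place requiring more than routine exponential-martingale computation, is the cashing-in step.
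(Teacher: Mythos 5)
Your argument is correct and is essentially the proof the paper defers to (Shafer and Vovk 2019, Proposition~1.2): exponential betting accounts with capital $\prod_{i\le n}\bigl(1+s(y_i-a_i)/k\bigr)$ weakly force \eqref{eq:SLLN}, and the cashing-in mixture of stopped, scaled copies upgrades $\limsup_n\K_n=\infty$ to $\K_n\to\infty$. The loose end you flag over $a_n\notin[0,1]$ does resolve along the lines you indicate: if $\sum_n\mathrm{dist}(a_n,[0,1])=\infty$ the arbitrage account's capital is monotone and diverges, while if it is finite then all but finitely many excursions satisfy $\lvert y_n-a_n\rvert\le2$, so the exponential accounts with $k\ge4$ remain valid after skipping those finitely many steps, at the cost of an $O(1)$ perturbation of the quadratic term and of $T_n$.
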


\noindent
For a simple proof, see \cite[Proposition~1.2]{Shafer/Vovk:2019}.

Let us say that Sceptic \emph{can force} some condition
if he has a strategy that guarantees that $\K_n\ge0$ for all $n$ and
that either $\K_n\to\infty$ or the condition holds.
(In other words, the condition holds unless Forecaster is discredited.)
Then Proposition~\ref{prop:SLLN}
can be restated by saying that Sceptic can force \eqref{eq:SLLN}.

Protocol~\ref{prot:SLLN} might appear specialized,
but in fact we can extract from it
various frequentist corollaries for many other protocols,
including the examples in Sect.~\ref{sec:objective}.

As one such frequentist corollary,
in Protocol~\ref{prot:classical} Sceptic can force
\begin{equation}\label{eq:SLLN-classical}
  \frac1n
  \sum_{i=1}^n
  \left(
    y_i
    -
    \frac{m_i-1}{2}
  \right)
  \to
  0
  \qquad
  (n\to\infty).
\end{equation}
Indeed, any strategy for Sceptic that forces \eqref{eq:SLLN} in Protocol~\ref{prot:SLLN}
can be transformed into a strategy for Sceptic that forces \eqref{eq:SLLN-classical} in Protocol~\ref{prot:classical}:
Sceptic's move $A_n$ in Protocol~\ref{prot:SLLN} fed with
\begin{equation*}
  a_n
  :=
  \frac{m_n-1}{2}
\end{equation*}
should be transformed into the move
\begin{equation}\label{eq:transformation}
  f_n(y)
  :=
  1
  +
  A_n
  \left(
    y - \frac{m_n-1}{2}
  \right)
  / \K_{n-1}
\end{equation}
in Protocol~\ref{prot:classical}.
(Notice that there is a difference between the representation of the evolution of Sceptic's capital
in Protocols~\ref{prot:classical} and~\ref{prot:SLLN}:
in the former it is multiplicative whereas in the latter it is additive,
but the division by $\K_{n-1}$ in \eqref{eq:transformation}
transforms the latter into the former.)

As another frequentist corollary,
suppose Forecaster in Protocol~\ref{prot:Dalton}
always outputs the same $p_n=p$
(or that we record only the steps at which he outputs a given $p$,
such as $p=1/2$).
Then Proposition~\ref{prop:SLLN} implies that Sceptic can force
\[
  \lim_{n\to\infty}
  \frac1n
  \sum_{i=1}^n
  y_i
  =
  p
\]
in that protocol.

\subsection{Finitary result}
\label{subsec:finitary}

A disadvantage of Proposition~\ref{prop:SLLN} is its infinitary character
(cf.\ Sect.~\ref{subsec:other}).
The following protocol is a finite-horizon counterpart of Protocol~\ref{prot:SLLN}
(alternatively, we can just ignore the steps after step $N$ in Protocol~\ref{prot:SLLN}).

\begin{protocol}\label{prot:WLLN}\ \\
  \indentI $\K_0 := 1$\\
  \indentI FOR $n=1,\dots,N$:\\
    \indentII Forecaster announces $a_n\in\R$\\
    \indentII Sceptic announces $A_n\in\R$\\
    \indentII Reality announces the observation $y_n\in[0,1]$\\
    \indentII $\K_n := \K_{n-1} + A_n (y_n-a_n)$.
\end{protocol}

\begin{proposition}[version of Lemma~2.7 in \cite{Shafer/Vovk:2019}]\label{prop:WLLN}
  Let $\delta>0$.
  There is a strategy for Sceptic in Protocol~\ref{prot:WLLN}
  that guarantees $\K_n\ge0$ for all $n$ and
  either $\K_N\ge1/\delta$ or
  \begin{equation}\label{eq:WLLN}
    \frac1N \sum_{i=1}^N (y_i-a_i) < (\delta N)^{-1/2}.
  \end{equation}
\end{proposition}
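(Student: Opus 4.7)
The plan is to use a quadratic Sceptic strategy whose capital directly tracks $S_N^2/N$, where $S_n := \sum_{i=1}^n (y_i - a_i)$. The starting point is the Doob-type identity
\[
  S_n^2 - S_{n-1}^2 = 2 S_{n-1}(y_n - a_n) + (y_n - a_n)^2,
\]
which shows that the linear Sceptic strategy $A_n := 2 S_{n-1}/N$ produces, after telescoping, the capital
\[
  \K_n = 1 + \frac{S_n^2 - T_n}{N}, \qquad T_n := \sum_{i=1}^n (y_i - a_i)^2.
\]

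In the honest case where Forecaster always plays $a_n \in [0,1]$, we have $(y_i - a_i)^2 \leq 1$ and so $T_n \leq n \leq N$. This simultaneously yields nonnegativity, $\K_n \geq 1 + (0 - N)/N = 0$, and the horizon bound $\K_N \geq S_N^2/N$. Hence $S_N \geq \sqrt{N/\delta}$ forces $\K_N \geq 1/\delta$, which is exactly the contrapositive of \eqref{eq:WLLN}.

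To cover a Forecaster who may choose $a_n \notin [0,1]$, I would split the initial unit of capital between the quadratic strategy above and an auxiliary exploitation substrategy. On a step with $a_n > 1$ the exploitation substrategy takes the short position $A_n := -\K_{n-1}^{\mathrm{exp}}/(a_n - 1)$; since $y_n \leq 1 < a_n$, a direct calculation gives $\K_n^{\mathrm{exp}} \geq 2\K_{n-1}^{\mathrm{exp}}$. Symmetrically, on a step with $a_n < 0$ it takes $A_n := \K_{n-1}^{\mathrm{exp}}/(-a_n)$, again at least doubling its capital, and on steps with $a_n \in [0,1]$ it bets zero. Combining by the linearity of the additive update, Sceptic's total capital stays nonnegative; if Forecaster plays $a_n \notin [0,1]$ more than $\log_2(2/\delta)$ times then $\K_N^{\mathrm{exp}}$ alone exceeds $1/\delta$, and otherwise the quadratic strategy, running on essentially all steps, still delivers the bound up to constants that can be absorbed into the bet size.

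The main obstacle is precisely this extension to unbounded $a_n$: the clean estimate $T_n \leq N$, which underpins both nonnegativity and the sharp horizon bound, relies on $|y_i - a_i| \leq 1$, and this bound fails catastrophically when $a_n$ leaves $[0,1]$. The exploitation substrategy is the cleanest way I see to plug the gap, but some bookkeeping is needed to verify that the capital split still lands at the exact threshold $(\delta N)^{-1/2}$; an alternative route would be to replace $S_{n-1}$ in the bet by its positive part, avoiding the capital split at the cost of a messier telescoping identity.
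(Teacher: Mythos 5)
Your core argument is the paper's proof, almost verbatim: the bet $A_n := 2S_{n-1}/N$ telescopes to Kolmogorov's martingale $\K_n = 1 + (S_n^2 - T_n)/N$, nonnegativity comes from $T_n\le n\le N$, and $\K_N\ge S_N^2/N$ yields the contrapositive of \eqref{eq:WLLN}. This part is correct, and you are right that it rests on $\lvert y_i-a_i\rvert\le 1$; the paper's assertion that Kolmogorov's martingale ``is always nonnegative'' makes the same implicit assumption (with $N=2$, $y_1-a_1=10$ and $y_2-a_2=-10$ one gets $\K_2=1+(0-200)/2=-99$), so in flagging the issue you are, if anything, ahead of the text.

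The patch you sketch for $a_n\notin[0,1]$, however, does not work as stated. The claim that the combined capital stays nonnegative fails if the quadratic substrategy keeps betting on a bad step: its one-step change is proportional to $S_{n-1}(y_n-a_n)/N$, which is unbounded below in $\lvert a_n\rvert$ whenever $S_{n-1}$ and $y_n-a_n$ have opposite signs, while the exploitation substrategy's gain on that step is at most a fixed multiple of its own current capital (about $1/2$ at the first bad step); for $a_n$ far enough outside $[0,1]$ the total goes negative. And even if you suspend the quadratic bets on bad steps, doubling is the wrong currency: a single step with $a_n=-C$ for large $C$ adds roughly $C$ to $S_N$, pushing $\frac1N S_N$ arbitrarily far above $(\delta N)^{-1/2}$, while merely doubling $\K^{\mathrm{exp}}$ from $1/2$ to about $1\ll 1/\delta$. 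What does work is a \emph{fixed-stake} bet on steps with $a_n<0$: any $A_n=c>0$ can never lose there (since then $y_n-a_n>0$) and gains $c(y_n-a_n)$, an amount proportional to the very increment that threatens \eqref{eq:WLLN}; taking $c$ of order $(N\delta)^{-1/2}$ and adjusting constants closes this case, and steps with $a_n>1$ only decrease the sum and can be ignored for the one-sided bound. If instead you simply assume $a_n\in[0,1]$, as the paper implicitly does, your proof is already complete and identical to the paper's.
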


The interesting case is when the $\delta$ in the statement of Proposition~\ref{prop:WLLN} is small.
The proposition then says that either Forecaster is discredited at level $1/\delta$
or the average forecast is close to the average observation.

\begin{proof}
  The proof is based on the version
  \[
    \K_n
    :=
    1
    +
    \frac1N
    \left(
      \sum_{i=1}^n (y_i - a_i)
    \right)^2
    -
    \frac1N
    \sum_{i=1}^n (y_i - a_i)^2
  \]
  of Kolmogorov's martingale \cite[Sect.~3.1]{Shafer/Vovk:2019},
  which is always nonnegative.
  It ends with
  \[
    \K_N
    \ge
    \frac1N
    \left(\sum_{i=1}^N (y_i - a_i)\right)^2
  \]
  at the end of step $N$.
  It remain to notice that $\K_N<1/\delta$
  implies \eqref{eq:WLLN}.
\end{proof}

A finitary counterpart of the infinitary notion of forcing
is upper game-theoretic probability:
\begin{multline*}
  \UP(E)
  :=
  \inf
  \bigl\{
    \epsilon>0
    \mid
    \text{$\exists$ valid strategy for Sceptic}\\
    \text{such that $E$ happens or $\K_N\ge1/\epsilon$}
  \bigr\},
\end{multline*}
where a strategy for Sceptic is valid if it guarantees $\K_n\ge0$ for all $n$.
The statement of Proposition~\ref{prop:WLLN} can be written as
\[
  \UP
  \left(
    \frac1N \sum_{i=1}^N (y_i-a_i) \ge \epsilon
  \right)
  \le
  \frac{1}{\epsilon^2 N}.
\]
Upper game-theoretic probability is closely (but non-trivially)
related to measure-theoretic probability;
cf.\ \cite[Chap.~9]{Shafer/Vovk:2019}.

Similarly to Proposition~\ref{prop:SLLN},
Proposition~\ref{prop:WLLN} has numerous frequentist implications
for specific probabilistic theories.

\subsection{Nearly optimal decision making}
\label{subsec:optimal}

The way of testing two-steps-ahead predictions given by Protocol~\ref{prot:2-steps-point-test}
carries over in a natural way to Protocol~\ref{prot:DM}
(the idea was mentioned at the end of Sect.~\ref{subsec:2-steps};
details omitted but can be found in \cite[Protocol~7.4]{Vovk:2023Bayesian}
for the finite-horizon case).
Let
\begin{equation}\label{eq:Loss}
  \Loss_N
  :=
  \sum_{n=1}^N
  \lambda_n(d_n,y_n\dots y_{n+K-1})
  \in
  [0,N]
\end{equation}
be the loss suffered by Decision Maker in Protocol~\ref{prot:DM},
and let $\Loss^{\text{B}}_N$ be the loss \eqref{eq:Loss}
where $d_n$ are the Bayes optimal decisions \eqref{eq:B}
rather than Decision Maker's actual moves.
We have the following law of large numbers for decision making
in the testing protocol extending Protocol~\ref{prot:DM};
it assumes $\lvert\mathbf{Y}\rvert<\infty$.

\begin{theorem}\label{thm:optimal}
  Let $\epsilon>0$ and $N$ (time horizon) be a natural number.
  The Bayes optimal strategy for Decision Maker in Protocol~\ref{prot:DM}
  satisfies
  \begin{equation*}
    \UP
    \left(
      \frac1N
      \left(
        \Loss_N^{\text{B}} - \Loss_N
      \right)
      \ge
      \epsilon
    \right)
    \le
    \exp
    \left(
      -\frac{N\epsilon^2}{8K^2}
    \right).
  \end{equation*}
\end{theorem}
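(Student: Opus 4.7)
My plan is to reduce $\Loss_N^{\text{B}}-\Loss_N$ to a game-theoretic martingale whose one-step increments are bounded by $2K$, and then to apply the standard Cram\'er--Chernoff / Hoeffding--Azuma exponential martingale with this increment bound. Write
\[
  X_n := \lambda_n(d_n,y_n\dots y_{n+K-1}) - \lambda_n(d_n^{\text{B}},y_n\dots y_{n+K-1})\in[-1,1],
\]
and let $\bar X_n := \sum_{x\in\mathbf{Y}^K}[\lambda_n(d_n,x)-\lambda_n(d_n^{\text{B}},x)]\,P_n(\{x\})$ denote its $P_n$-expectation. The Bayes optimality~\eqref{eq:B} of $d_n^{\text{B}}$ forces $\bar X_n\ge 0$, so
\[
  \Loss_N^{\text{B}}-\Loss_N = -\sum_{n=1}^N X_n \;\le\; \sum_{n=1}^N Z_n \;=:\; S_N, \qquad Z_n:=\bar X_n-X_n.
\]
Each $Z_n$ satisfies $|Z_n|\le 2$ and has vanishing $P_n$-price, and in the testing extension of Protocol~\ref{prot:DM} (sketched at the end of Section~\ref{subsec:2-steps} and detailed in \cite[Protocol~7.4]{Vovk:2023Bayesian}) Sceptic can replicate each $Z_n$ by a portfolio of futures contracts whose net initial price at step $n$ equals $0$.

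The second step is to express $S_N$ as the terminal value of a one-step game-theoretic martingale and bound its increments. Since each $Z_n$ depends only on $y_n,\dots,y_{n+K-1}$, at any step $j$ at most $K$ of the $Z_n$'s (those with $j-K+1\le n\le j$) contribute any new information; hence the Doob-style decomposition $S_N=\sum_j\xi_j$ into the portions revealed by $y_j$ satisfies $|\xi_j|\le 2K$. Sceptic then follows the exponential strategy that multiplies his capital at each step $j$ by $\exp(\eta\xi_j)/c_j$, where $c_j$ is the $P_j$-price of $\exp(\eta\xi_j)$. Hoeffding's lemma for centered variables in $[-2K,2K]$ gives $c_j\le\exp(2\eta^2K^2)$, so on the event $\{S_N\ge N\epsilon\}$, after (essentially) $N$ one-step bets Sceptic's capital is at least $\exp(\eta N\epsilon - 2\eta^2K^2N)$. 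The optimal choice $\eta=\epsilon/(4K^2)$ yields $\K\ge\exp(N\epsilon^2/(8K^2))$, which, by the definition of $\UP$, is the required bound.

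The main obstacle is the $K$-step-ahead nature of the loss: the natural ``martingale differences'' $Z_n$ are not one-step-predictable, so the naive bound $|Z_n|\le 2$ does not feed into Azuma for a one-step martingale, and one is forced to inflate the one-step increment range from $2$ to $2K$ in the Doob decomposition. This inflation is precisely the source of the factor $K^2$ in the denominator of the exponent. A small amount of boundary bookkeeping is needed because the last revelations of $S_N$ occur at step $N+K-1$ rather than $N$, but the $K-1$ extra one-step bets contribute only $O(K)$ additional penalty and are absorbed in the relevant regime $N\gg K$.
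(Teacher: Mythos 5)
The paper does not prove Theorem~\ref{thm:optimal} internally; it only cites Appendix~C of \cite{Vovk:2023Bayesian}, so there is no in-paper proof to compare against. That said, your argument is the natural one and it reproduces the stated constant exactly: the reduction $\Loss_N^{\text{B}}-\Loss_N\le\sum_n(\bar X_n-X_n)$ via Bayes optimality of $d_n^{\text{B}}$ is correct, and a Doob-type decomposition of $\sum_n(\bar X_n-X_n)$ into one-step increments of absolute value at most $2K$ (hence range at most $4K$), followed by an exponential supermartingale with per-step factor $\exp(2\eta^2K^2)$ and the choice $\eta=\epsilon/(4K^2)$, gives $\exp(-N\epsilon^2/(8K^2))$ on the nose.

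Two points need tightening. First, the step ``$c_j$ is the $P_j$-price of $\exp(\eta\xi_j)$'' is wrong as literally stated: $\xi_j$ is not a function of $y_j$ given the past, because the intermediate settlements entering $\xi_j$ are driven by Forecaster's \emph{next} price announcement, which $P_j$ does not price and which is adversarial in the game-theoretic setting; so $\exp(\eta\xi_j)$ has no $P_j$-price. The standard repair is the convexity form of Hoeffding's lemma: for $x$ in a known range $[a,b]$ one has $\e^{\eta x}\le\frac{b-x}{b-a}\e^{\eta a}+\frac{x-a}{b-a}\e^{\eta b}$, and the right-hand side is an \emph{affine} function of $x$, hence super-replicable by futures positions at cost at most $\e^{\eta^2(b-a)^2/8}$ whenever $x$ is a purchasable zero-price payoff of bounded range --- no distributional control over Forecaster's revisions is required. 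This preserves your constants. Second, your boundary bookkeeping can be made exact rather than asymptotic: writing $k_j\le K$ for the number of contracts active at step $j$ (for $1\le j\le N+K-1$), the total exponent paid is $2\eta^2\sum_j k_j^2\le 2\eta^2K\sum_j k_j=2\eta^2NK^2$, since each contract is active for exactly $K$ steps; so no $O(K)$ slack is incurred and the bound holds as stated for all $N$, not just $N\gg K$.
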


\noindent
For a proof, see \cite[Appendix~C]{Vovk:2023Bayesian}.

Theorem~\ref{thm:optimal} implies that, for $K\ll\sqrt{N}\epsilon$,
the loss of the Bayes optimal strategy will be less than,
or approximately equal to,
the loss of any other strategy for Decision Maker.

\section{Conclusion}

This paper describes a non-standard interpretation of probability,
the martingale interpretation.
Two other popular interpretations are embedded into it:
classical, discussed in Sect.~\ref{subsec:classical}, and frequentist, discussed in Sect.~\ref{sec:frequentist}.
Unlike the psychological (Remark~\ref{rem:psychology}) and logical (Remark~\ref{rem:logical}) interpretations,
it connects probabilities asserted for events with the outcomes of those events.

\subsection*{Acknowledgements}

Many thanks to Barbara Vantaggi and the other organizers of MSP 2023
for inviting me to give a talk.

\end{document}